\newtheorem{assumption}{}
\renewcommand{\qed}{\hfill \ensuremath{\Box}}
\newcommand{\Id}{\mathbf{I}}
\newcommand{\R}{\mathbb{R}}
\newcommand{\pa}[1]{\left( #1 \right)}
\newcommand{\abs}[1]{\left| #1 \right|}
\newcommand{\norm}[1]{\left\lVert #1 \right\rVert}
\newcommand{\norminf}[1]{\left\lVert #1 \right\rVert_{\infty}}
\newcommand{\normf}[1]{\left\lVert #1 \right\rVert_{F}}
\DeclareMathAlphabet{\mathcalligra}{T1}{calligra}{m}{n}
\newcommand{\fop}{\Av}
\newcommand{\E}{\mathbb{E}}
\newcommand{\loss}{\mathcal{L}}
\newcommand{\tp}{^\top}
\newcommand{\sqb}[1]{\left[ #1 \right]}
\newcommand{\prob}[1]{\mathbb{P}\left( #1 \right)}
\newcommand{\ran}[1]{\mathrm{Im}\left(#1\right)}
\newcommand{\Expect}[2]{\E_{#1}\sqb{#2}}
\newcommand{\Av}{\mathbf{A}}
\newcommand{\Mv}{\mathbf{M}}
\newcommand{\zv}{\mathbf{z}}
\newcommand{\yv}{\mathbf{y}}
\newcommand{\xv}{\mathbf{x}}
\newcommand{\xvz}{\xv_0}
\newcommand{\xvc}{\overline{\xv}}
\newcommand{\uv}{\mathbf{u}}
\newcommand{\thetav}{\pmb{\theta}}
\newcommand{\thetavt}{\thetav(t)}
\newcommand{\thetavs}{\thetav(s)}
\newcommand{\thetavz}{\thetav_0}
\newcommand{\gdipz}{\mathbf{g}(\uv,\Wv(0))}
\newcommand{\gdip}{\mathbf{g}(\uv,\thetav)}
\newcommand{\gdipt}{\mathbf{g}(\uv, \thetavt)}
\newcommand{\jcal}{\mathcal{J}}
\newcommand{\jtheta}{\mathcal{J}(\thetav)}
\newcommand{\jthetat}{\mathcal{J}(\thetav\left(t\right))}
\newcommand{\jthetas}{\mathcal{J}(\thetav\left(s\right))}
\newcommand{\jthetaz}{\mathcal{J}(\thetavz)}
\newcommand{\Hv}{\mathbf{H}}
\newcommand{\Wv}{\mathbf{W}}
\newcommand{\Wvalt}{\mathbf{\widetilde{W}}}
\newcommand{\Vv}{\mathbf{V}}
\newcommand{\jW}{\mathcal{J}(\Wv)}
\newcommand{\jWalt}{\mathcal{J}(\Wvalt)}
\newcommand{\yvt}{\yv(t)}
\newcommand{\yvz}{\yv(0)}
\newcommand{\yvc}{\overline{\yv}}
\newcommand{\ysy}{\yv(s) - \yv}%
\newcommand{\yty}[1][]{%
\ifthenelse{\equal{#1}{}}{\yvt - \yv}{\yv(#1) - \yv}%
}
\newcommand{\yzy}[1][]{%
\ifthenelse{\equal{#1}{}}{\yvz - \yv}{\yv(#1) - \yv}%
}
\newcommand{\ytypar}{(\yty)}
\newcommand{\Cphi}{C_{\phi}}
\newcommand{\Cphid}{C_{\phi'}}
\newcommand{\stddistrib}{\mathcal{N}(0,1)}
\newcommand{\Ball}{\mathbb{B}}
\newcommand{\sph}{\mathbb{S}}
\newcommand{\Lip}{\mathrm{Lip}}
\newcommand{\lammin}{\lambda_{\min}}
\newcommand{\lammax}{\lambda_{\max}}
\newcommand{\sigmin}{\sigma_{\min}}
\newcommand{\sigmax}{\sigma_{\max}}
\newcommand{\sigminA}{\sigma_{\fop}}
\newcommand{\deriv}[2]{\frac{\mathrm{d} #1}{\mathrm{d} #2}}
\begin{document}

\title{Convergence Guarantees of Overparametrized Wide Deep Inverse Prior}
\titlerunning{Overparametrization of DIP Networks}

\author{Nathan Buskulic\Envelope \and Yvain Qu\'eau \and Jalal Fadili}
\authorrunning{N. Buskulic et al.}

\institute{Normandie Univ., UNICAEN, ENSICAEN, CNRS, GREYC, Caen, France\\ 
\Envelope \href{mailto:nathan.buskulic@unicaen.fr}{\texttt{nathan.buskulic@unicaen.fr}}}


\maketitle


\begin{abstract}
Neural networks have become a prominent approach to solve inverse problems in recent years. Amongst the different existing methods, the Deep Image/Inverse Priors (DIPs) technique is an unsupervised approach that optimizes a highly overparametrized neural network to transform a random input into an object whose image under the forward model matches the observation. However, the level of overparametrization necessary for such methods remains an open problem. In this work, we aim to investigate this question for a two-layers neural network with a smooth activation function. We provide overparametrization bounds under which such network trained via continuous-time gradient descent will converge exponentially fast with high probability which allows to derive recovery prediction bounds. This work is thus a first step towards a theoretical understanding of overparametrized DIP networks, and more broadly it participates to the theoretical understanding of neural networks in inverse problem settings. 





\end{abstract}

\keywords{Inverse problems  \and Deep Image/Inverse Prior \and Overparameterization \and Gradient Flow.} \vspace*{-1em}



\section{Introduction}
\subsection{Problem Statement}
A linear inverse problem consists in reliably recovering an object $\xvc \in \R^n$ from noisy indirect observations
\begin{align}\label{eq:forward}
\yv = \fop \xvc + \epsilon ,
\end{align}
where $\yv \in \R^m$ is the observation, $\fop\in\R^{m\times n}$ is a linear forward operator, and $\epsilon$ stands for some additive noise. Without loss of generality, we will assume throughout that $\yv \in \ran{\fop}$.

In recent years, the use of sophisticated machine learning algorithms, including deep learning, to solve inverse problems has gained a lot of momentum and provides promising results, see e.g., reviews \cite{arridge_solving_2019,ongie_deep_2020}. Most of these methods are supervised and require extensive datasets for training, which might not be available. An interesting unsupervised alternative~\cite{ulyanov_deep_2020} is known as Deep Image Prior, which is also named Deep Inverse Prior (DIP) as it is not confined to images. 
In the DIP framework, a generator network $\mathbf{g}: (\uv,\thetav) \in \R^d\times \R^p \mapsto \xv \in \R^n$, with activation function $\phi$, is optimized to transform some random input $\uv \in \R^d$ into a vector in $\xv \in \R^n$. The parameters $\thetav$ of the network are optimized via (possibly stochastic) gradient descent to minimize the squared Euclidean loss
\begin{align}\label{eq:min}
\loss(\gdip) = \frac{1}{2m}\norm{\fop\gdip - \yv}^2.
\end{align}
Theoretical understanding of recovery and establishing convergence guarantees for deep learning-based methods is of paramount importance to make their routine usage in critical applications reliable \cite{mukherjee_learned_2022}. Our goal in this paper is to participate to this endeavour by explaining when gradient descent consistently and provably finds global minima of \eqref{eq:min}, and how this translates into recovery guarantees of \eqref{eq:forward}. For this, we focus on a continuous-time gradient flow applied to \eqref{eq:min}:
\begin{align}
\begin{cases}
\dot{\thetav}(t) = - \nabla_{\thetav} \loss(\gdipt),  \\
\thetav(0) = \thetav_0 .
\end{cases}\label{eq:gradflow}
\end{align}
This is an idealistic setting which makes the presentation simpler and it is expected to reflect the behavior of practical and commonly encountered first-order descent algorithms, as they are known to approximate gradient flows.

\subsection{Contributions} 
We will deliver a first theoretical analysis of DIP models in the overparametrized regime. We will first analyze \eqref{eq:gradflow} by providing sufficient conditions for $\yvt := \fop \gdipt$ to converge exponentially fast to a globally optimal solution in the observation space. This result is then converted to a prediction error on $\yvc := \fop\xvc$ through an early stopping strategy. Our conditions and bounds involve the conditioning of the forward operator, the minimum and maximum singular values of the Jacobian of the network, as well as its Lipschitz constant. We will then turn to evaluating these quantities for the case of a two-layer neural network 
\begin{align}\label{eq:dipntk}
    \gdip = \frac{1}{\sqrt{k}}\Vv \phi(\Wv\uv),
\end{align}
with  $\Vv \in \R^{n \times k}$ and $\Wv \times \R^{k \times d}$, and $\phi$ an element-wise nonlinear activation function. The scaling by $\sqrt{k}$ will become clearer later. In this context, the network will be optimized with respect to the first layer (i.e., $\Wv$) while keeping the second (i.e., $\Vv$) fixed. Consequently, $\thetav=\Wv$. We show that for a proper random initialization $\Wv(0)$ and sufficient overparametrization, all our conditions are in force and the smallest eigenvalue of the Jacobian is indeed bounded away from zero independently of time. We provide a characterization of the overparametrization needed in terms of $(k,d,n)$ and the conditioning of $\fop$. Lastly, we show empirically that the behavior of real-world DIP networks is consistent with our theoretical bounds.

\subsection{Relation to Prior Work}\label{sec:prior}
\subsubsection{Data-Driven Methods to Solve Inverse Problems}
Data-driven approaches to solve inverse problems come in various forms~\cite{arridge_solving_2019,ongie_deep_2020}. The first type trains an end-to-end network to directly map the observations to the signals for a specific problem. 
While they can provide impressive results, these methods can prove very unstable~\cite{mukherjee_learned_2022} as they do not use the physics of the problem which can be severely ill-posed. To cope with these problems, several hybrid models that mix model- and data-driven algorithms were developed in various ways. One can learn the regularizer of a variational problem  \cite{prost_learning_2021} or use Plug-and-Play methods \cite{venkatakrishnan_plug-and-play_2013} for example. Another family of approaches, which takes inspiration from classical iterative optimization algorithms, is based on unrolling 
(see \cite{monga_algorithm_2021} for a review of these methods). 
Still, all these methods require extensive amount of training data, which may not always be available. Their theoretical recovery guarantees are also not well understood \cite{mukherjee_learned_2022}.

\vspace*{-.5em}

\subsubsection{Deep Inverse Prior}
The DIP model \cite{ulyanov_deep_2020} (and its extensions that mitigate some of its empirical issues ~\cite{liu_image_2019,mataev_deepred_2019,shi_measuring_2022,zukerman_bp-dip_2021}) is an unsupervised alternative to the supervised approches briefly reviewed above. The empirical idea is that the architecture of the network acts as an implicit regularizer and will learn a more meaningful transformation before overfitting to artefacts or noise. With an early stopping strategy, one can get the network to generate a vector close to the sought signal. However, this remains purely empirical and there is no guarantee that a network trained in such manner converges in the observation space (and even less in the signal space). Our work aims at reducing this theoretical gap, by analyzing the behaviour of the network in the observation (prediction) space.

\vspace*{-.5em}

\subsubsection{Theory of Overparametrized Networks}
In parallel to empirical studies, there has been a lot of effort to develop some theoretical understanding of the optimization of overparametrized networks \cite{bartlett_deep_2021,fang_mathematical_2021}. Amongst the theoretical models that emerged to analyze neural networks, the Neural Tangent Kernel (NTK) captures the behavior of neural networks in the infinite width limit during optimization via gradient descent. In the NTK framework, the neural network behaves as its linearization around the initialization, thus yielding a model equivalent to learning with a specific positive-definite kernel (so-called NTK). In \cite{jacot_neural_2018}, it was shown that in a highly overparametrized regime and random initialization, parameters $\thetavt$ stay near the initialization, and are well approximated by their linearized counterparts at all times (also called the ``lazy'' regime in \cite{chizat_lazy_2019}). With a similar aim, several works characterized the overparametrization necessary to obtain similar behaviour for shallow networks, see e.g., \cite{du_gradient_2019,oymak_overparameterized_2019,allen-zhu_convergence_2019,oymak_toward_2020}. All these works provide lower bounds on the number of neurons from which they can prove convergence rates to a zero-loss solution. Despite some apparent similarities, our setting has important differences. On the one hand, we have indirect measurements through (fixed) $\fop$, the output is not scalar, and there is no supervision. On the other hand, unlike all above works which deal with a supervised training setting, in the DIP model the dimension $d$ of the input is a free parameter, while it is imposed in a supervised setting.
\section{DIP Guarantees}\label{sec:main_res}
\subsection{Notations}
For a matrix $\Mv \in \R^{a \times b}$ we denote, when dimension requirements are met, by $\lammin(\Mv)$ and $\lammax(\Mv)$ (resp. $\sigmin(\Mv)$ and $\sigmax(\Mv)$) its smallest and largest eigenvalues (resp. non-zero singular values), and by $\kappa(\Mv) = \frac{\sigmax(\Mv)}{\sigmin(\Mv)}$ its condition number. We also denote by $\normf{\cdot}$ the Frobenius norm and $\norm{\cdot}$ the Euclidean norm of a vector (or operator norm of a matrix). We use $\Mv^i$ (resp. $\Mv_i$) as the $i$-th row (resp. column) of $\Mv$. We represent a ball of radius $r$ and center $x$ by $\Ball(x,r)$. 
We also define $\yvt = \fop \gdipt$ and $\yvc = \fop\xvc$. The Jacobian of the network is denoted $\jthetat$. The Lipschitz constant of a mapping is denoted $\Lip(\cdot)$. We set $\Cphi = \sqrt{\Expect{g\sim\stddistrib}{\phi(g)^2}}$ and $\Cphid = \sqrt{\Expect{g\sim\stddistrib}{\phi'(g)^2}}$ with $\Expect{}{X}$ the expected value of $X$. 

\subsection{Main Result}

\paragraph{\textbf{Standing Assumptions}} 
In the rest of this work, we assume that:
\begin{mdframed}
\begin{assumption}\label{ass:u_sphere}
$\uv$ is drawn uniformly on $\sph^{d-1}$; \vspace*{-.75em}
\end{assumption}
\begin{assumption}\label{ass:w_init}
$\Wv(0)$ has iid entries from $\stddistrib$; \vspace*{-.75em}
\end{assumption}
\begin{assumption}\label{ass:v_init}
$\Vv$ has iid columns with identity covariance and $D$-bounded entries;  \vspace*{-.75em}
\end{assumption}
\begin{assumption}\label{ass:phi_diff}
$\phi$ is a twice differentiable function with $B$-bounded derivatives.  
\end{assumption}
\end{mdframed}



Assumptions~\ref{ass:u_sphere}, \ref{ass:w_init} and \ref{ass:v_init} are standard. Assumptions~\ref{ass:phi_diff} is met by many activations such as the softmax, sigmoid or hyperbolic tangent. Including the ReLU would require more technicalities that will be avoided here. 

\paragraph{\textbf{Well-posedness}} In order for our analysis to hold, the Cauchy problem~\eqref{eq:gradflow} needs to be well-defined. This is easy to prove upon observing that under \eqref{ass:phi_diff}, the gradient of the loss is both Lipschitz and continuous. Thus, the Cauchy-Lipschitz theorem applies, ensuring that \eqref{eq:gradflow} has a unique global continuously differentiable solution trajectory.


Our main result establishes the prediction error for the DIP model.
\begin{theorem}\label{thm:main}
Consider a network $\gdip$, with $\phi$ obeying \eqref{ass:phi_diff}, optimized via the gradient flow \eqref{eq:gradflow}.
\begin{enumerate}[label=(\roman*)]
\item Let $\sigminA = \inf_{\zv \in \ran{\fop}}\norm{\fop\tp \zv}/\norm{\zv} > 0$. Suppose that
\begin{equation}\label{eq:bndR}
\frac{\norm{\yv - \fop \mathbf{g}(\uv,\thetavz)}}{\sigminA} < \frac{\sigmin(\jthetaz)^2}{4\Lip(\jcal)} .
\end{equation}
Then for any $\epsilon > 0$
\begin{align}\label{eq:convexp}
\norm{\yvt - \yvc} \leq 2 \norm{\epsilon} \qquad \text{for all} \qquad t \geq \frac{4m\log\left(\norm{\yv - \fop \mathbf{g}(\uv,\thetavz)}/\norm{\epsilon}\right)}{\sigminA^2\sigmin(\jthetaz)^2} .
\end{align}

\item Let the one-hidden layer network \eqref{eq:dipntk} with architecture parameters obeying
\begin{align*}
k \geq C_1 \kappa(\fop)^2n \pa{\sqrt{n}\pa{\sqrt{\log(d)} + 1} + \sqrt{m}}^2    
\end{align*}
Then 
\begin{align*}
\norm{\yvt - \yvc} \leq 2 \norm{\epsilon} \qquad \text{for all} \qquad t \geq \frac{C_2m\log\left(\norm{\yv - \fop \mathbf{g}(\uv,\Wv(0))}\right)}{\sigminA^2\Cphid^2} 
\end{align*}
with probability at least $1 - n^{-1} - d^{-1}$, where $C_i$ are positive constants that depend only on the activation function and the bound $D$.
\end{enumerate}
\end{theorem}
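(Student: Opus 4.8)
The plan is to establish the abstract rate in part (i) by writing a closed autonomous dynamics for the observation-space residual $\yvt - \yv$, and then to obtain part (ii) by instantiating the three abstract quantities $\sigmin(\jthetaz)$, $\Lip(\jcal)$ and the initial misfit for the network \eqref{eq:dipntk}. For part (i), differentiating the residual along \eqref{eq:gradflow} and using $\dot{\thetav}(t) = -\frac{1}{m}\jthetat\tp\fop\tp(\yvt - \yv)$ gives, by the chain rule, $\frac{d}{dt}(\yvt - \yv) = -\frac{1}{m}\AJJAt(\yvt - \yv)$. Since $\yv \in \ran{\fop}$ and $\fop\gdipt \in \ran{\fop}$, the residual stays in $\ran{\fop}$ for all $t$. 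Assuming the wide Jacobian has full row rank $n$ (so $\jthetat\tp$ is injective), I lower bound the quadratic form $\dotprod{\yvt - \yv}{\AJJAt(\yvt - \yv)} = \norm{\jthetat\tp\fop\tp(\yvt - \yv)}^2 \geq \sigmin(\jthetat)^2\sigminA^2\norm{\yvt - \yv}^2$, using the definition of $\sigminA$ and $\norm{\fop\tp \zv}\geq \sigminA\norm{\zv}$ on $\ran{\fop}$. This yields $\frac{d}{dt}\norm{\yvt - \yv}^2 \leq -\frac{2}{m}\sigminA^2\sigmin(\jthetat)^2\norm{\yvt - \yv}^2$, hence exponential decay as soon as $\sigmin(\jthetat)$ is bounded away from zero.

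The crux of part (i) is a continuity/bootstrap argument showing $\sigmin(\jthetat)\geq \sigmin(\jthetaz)/2$ for all $t$. Combining $\frac{d}{dt}\norm{\yvt - \yv} = -\frac{1}{m}\norm{\jthetat\tp\fop\tp(\yvt - \yv)}^2/\norm{\yvt - \yv}$ with $\norm{\dot{\thetav}(t)} = \frac{1}{m}\norm{\jthetat\tp\fop\tp(\yvt - \yv)}$ and the lower bound above gives $\norm{\dot{\thetav}(t)} \leq -\frac{1}{\sigminA\sigmin(\jthetat)}\frac{d}{dt}\norm{\yvt - \yv}$. Integrating over the maximal interval where $\sigmin(\jthetat)\geq \sigmin(\jthetaz)/2$ bounds the parameter drift by $\frac{2\norm{\yvz - \yv}}{\sigminA\sigmin(\jthetaz)}$; inserting this into the Lipschitz estimate $\abs{\sigmin(\jthetat) - \sigmin(\jthetaz)} \leq \Lip(\jcal)\norm{\thetavt - \thetavz}$ and invoking exactly hypothesis \eqref{eq:bndR} shows $\sigmin(\jthetat)$ can never reach $\sigmin(\jthetaz)/2$, so by continuity the good region is all of $[0,\infty)$. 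The decay rate $\frac{\sigminA^2\sigmin(\jthetaz)^2}{4m}$ then follows, and the triangle inequality with $\yv = \yvc + \epsilon$ turns $\norm{\yvt - \yv}\leq\norm{\epsilon}$ into $\norm{\yvt - \yvc}\leq 2\norm{\epsilon}$, giving the stopping time in \eqref{eq:convexp}.

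For part (ii), I instantiate the three quantities. The Jacobian of \eqref{eq:dipntk} with respect to $\Wv$ has a block structure whose signal-space Gram matrix is $\jthetaz\jthetaz\tp = \frac{1}{k}\sum_{j=1}^k \phi'\pa{(\Wv(0)\uv)_j}^2\,\Vv_j\Vv_j\tp$, a sum of $k$ independent PSD terms with mean $\Cphid^2\Id$, using $\uv\in\sph^{d-1}$, the identity covariance of the columns $\Vv_j$, and $\Expect{g\sim\stddistrib}{\phi'(g)^2} = \Cphid^2$. Matrix concentration of Bernstein type, exploiting the $D$-bounded entries of $\Vv$ and the $B$-bounded $\phi'$, then gives $\sigmin(\jthetaz)^2 \geq \Cphid^2/2$ with high probability once $k \gtrsim n$ up to logarithmic factors. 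Separately, the bounded second derivative of $\phi$ yields the deterministic estimate $\Lip(\jcal)\lesssim \frac{DB\sqrt{n}}{\sqrt{k}}$, and concentration of $\norm{\mathbf{g}(\uv,\Wv(0))}$ controls the initial misfit $\norm{\yv - \fop\mathbf{g}(\uv,\Wv(0))}$ by terms scaling like $\sigmax(\fop)\sqrt{n}(\sqrt{\log d} + 1)$ and $\sqrt{m}$. Substituting into \eqref{eq:bndR}, the factor $\sigminA = \sigmin(\fop)$ in the denominator combines with $\sigmax(\fop)$ inside the misfit to produce $\kappa(\fop)^2$, while $\Lip(\jcal)\propto k^{-1/2}$ converts the condition into $k \geq C_1 \kappa(\fop)^2 n \pa{\sqrt{n}\pa{\sqrt{\log(d)} + 1} + \sqrt{m}}^2$; replacing $\sigmin(\jthetaz)^2$ by $\Cphid^2/2$ in the rate yields the stated stopping time, and a union bound over the two concentration events gives the probability $1 - n^{-1} - d^{-1}$.

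The step I expect to be the main obstacle is the high-probability lower bound on $\sigmin(\jthetaz)$: showing that the random Gram matrix concentrates around $\Cphid^2\Id$ in operator norm with a sharp enough dependence on $(k,n,d)$ requires careful matrix concentration for a sum of only boundedly-controlled rank-one terms, and it is precisely this estimate that dictates the $\kappa(\fop)^2 n(\cdots)^2$ scaling. Controlling the spectral norm of $\Vv$ and the sub-Gaussian maxima that produce the $\sqrt{\log d}$ factor in the initial misfit is the delicate bookkeeping, whereas the gradient-flow analysis of part (i) is comparatively routine once the residual dynamics is in hand.
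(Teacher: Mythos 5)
Your proposal matches the paper's proof in both structure and substance: part (i) via the residual dynamics, the coupled differential inequalities bounding the parameter drift by $\frac{2\norm{\yvz-\yv}}{\sigminA\sigmin(\jthetaz)}$, and the continuity/bootstrap argument showing \eqref{eq:bndR} keeps $\sigmin(\jthetat)\geq\sigmin(\jthetaz)/2$ for all $t$; part (ii) via the same three estimates (matrix concentration for the Gram matrix $\frac{1}{k}\sum_i \phi'(\Wv^i(0)\uv)^2\Vv_i\Vv_i\tp$ with mean $\Cphid^2\Id$, the deterministic bound $\Lip(\jcal)\lesssim BD\sqrt{n/k}$, and Gaussian concentration of the initial misfit) combined by a union bound. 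The only deviations are immaterial constants and your use of matrix Bernstein where the paper applies matrix Chernoff, which is equally valid for this sum of bounded PSD rank-one terms.
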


%
%

Before proceeding with the proof, a few remarks are in order: 
\begin{enumerate}[label=$\bullet$]
\item We start with the scaling of the network architecture parameters required. First, the bound on $k$, the number of neurons of the hidden layer, scales quadratically in $n^2$ and linearly in $m$. We thus have the bound $k \gtrsim n^2m$. The probability of success in our theorem is also dependent on the architecture parameters. More precisely, this probability increases with growing number of observations.\\[.25em]

\item The other scaling of the theorem is on the input size $d$ and informs us that its influence is logarithmic. The bound is more demanding as $\fop$ becomes more ill-conditioned. The latter dependency can be interpreted as follows: the more ill-conditioned the original problem is, the larger the network needs to be. Let us emphasize that, contrary to other learning settings in the literature, the size $d$ of the random input  $\uv$ is free, and so far it has remained unclear how to choose it. Our result provides a first  answer for shallow networks in the overparametrized setting: most of the overparametrization necessary for the optimization to converge is due to $k$.\\[.25em]

\item On our way to prove \eqref{thm:main}, we actually show that $\yv(t)$ converges exponentially to $\yv$, which is converted to a recovery of $\yvc$ through an early stopping strategy. This ensures that the network does not overfit the noise and provides a solution in a ball around $\yvc$ whose radius is linear in the noise level (so-called prediction linear convergence in the inverse problem literature). This provides a first result on convergence of wide DIP networks that ensures they behave well in the observation space. \\[.25em]

\item One has to keep in mind, however, that Theorem~\ref{thm:main} does not say anything about the recovered vector generated by the network and its relation to $\xvc$ (in absence of noise and at convergence, it might be any element of $\xvc + \text{ker}(\fop)$). Of course, when $\fop$ is invertible, then we are done. In the general case, this is a much more challenging question which requires a more involved analysis and a restricted-type injectivity assumption. This will be the subject of a forthcoming paper.
\end{enumerate}


\section{Proof}
The proof consists of two main steps. First, we prove that under \eqref{eq:bndR}, $\yv(t)$ converges exponentially fast with a time-independent rate. We then use a triangle inequality and an early stopping criterion to show our result. The proof of the second claim will consist in verifying that \eqref{eq:bndR} holds with high probability for our random model of the two-layer network under our scaling. Both proofs rely on several technical lemmas, which will be given later.

\begin{enumerate}[label=(\roman*)]
\item The solution trajectory $\thetav(t)$, hence $\yv(t)$, is continuously differentiable, and thus
\begin{align}\label{eq:maindiffineq}
&\deriv{\frac{1}{2}\norm{\yty}^2}{t} 
= \ytypar\dot{\yv}(t)  \nonumber\\
&= \ytypar \fop \jthetat \dot{\thetav}(t)\nonumber\\
&= - \ytypar \fop \jthetat \nabla_{\thetav} \loss(\gdipt)\nonumber\\
&= - \frac{1}{m}\ytypar\tp \fop \jthetat \jthetat\tp \fop\tp \ytypar \nonumber\\
&= - \frac{1}{m}\norm{\jthetat\tp \fop\tp \ytypar}^2 \leq - \frac{\sigmin(\jthetat)^2\sigminA^2}{m} \norm{\yty}^2,
\end{align}
where we used the fact that $\yty \in \ran{\fop}$. In view of Lemma~\ref{lemma:link_params_singvals}\ref{claim:sigval_bounded_everywhere}, we have $\sigmin(\jthetat) \geq \sigmin(\jthetaz)/2$ for all $t \geq 0$ if the initialization error verifies~\eqref{eq:bndR}, and in turn 
\begin{align*}
\deriv{\norm{\yty}^2}{t} 
&\leq - \frac{\sigmin(\jthetaz)^2\sigminA^2}{2m} \norm{\yty}^2 .
\end{align*}
Integrating, we obtain 
\begin{align}\label{eq:expoconvy}
\norm{\yty} \leq \norm{\yvz-\yv} e^{-\tfrac{\sigmin(\jthetaz)^2\sigminA^2}{4m}t} .
\end{align}
Using 
\[
\norm{\yty} \leq \norm{\yvt - \yvc} + \norm{\epsilon} \leq \norm{\yvz-\yv} e^{-\tfrac{\sigmin(\jthetaz)^2\sigminA^2}{4m}t} + \norm{\epsilon}, 
\]
we get the early stopping bound by bounding the exponential term by $\norm{\epsilon}$.

\item To show the statement, it is sufficient to check that \eqref{eq:bndR} holds under our scaling. From Lemma~\ref{lemma:min_eigenvalue_singvalue_init}, we have
\begin{align*}
\sigmin(\jthetaz) \geq \Cphid/2 
\end{align*}
with probability at least $1-n^{-1}$ provided $k \geq C_0 n\log(n)$ for $C_0 > 0$. Combining this with Lemma~\ref{lemma:lip-jacobian} and Lemma~\ref{lemma:bound_initial_misfit} and the union bound, it is sufficient for \eqref{eq:bndR} to be fulfilled with probability at least $1-n^{-1}-d^{-1}$, that
\[
C_1\kappa(\fop)\pa{\sqrt{n}\pa{\sqrt{\log(d)}+1} + \sqrt{m}} < \frac{\Cphid^2\sqrt{k}}{16BD \sqrt{n}} .
\]
\end{enumerate}

We now prove the intermediate lemmas invoked in the proof. 


\begin{lemma}\label{lemma:link_params_singvals}
    \begin{enumerate}[label=(\roman*)]
        \item \label{claim:singvals_bounded_if_params_bounded} If $\thetav \in \Ball(\thetavz,R)$ with $R=\frac{\sigmin(\jthetaz)}{2\Lip(\jcal)}$, then
        \begin{align*}
            \sigmin(\jtheta) \geq \sigmin(\jthetaz)/2.
        \end{align*}
        
        \item  \label{claim:params_bounded_if_singvals_bounded} If for all $s \in [0,t] $, $\sigmin(\jthetas) \geq \frac{\sigmin(\jthetaz}{2}$, then 
            \begin{align*}
                \thetavt \in \Ball(\thetavz,R') \qquad \text{with} \qquad R' = \frac{2}{\sigminA\sigmin(\jthetaz)}\norm{\yzy}.
            \end{align*}
        
        \item \label{claim:sigval_bounded_everywhere}
        If $R'<R$, then for all $t \geq 0$, $\sigmin(\jthetat) \geq \sigmin(\jthetaz)/2$.
\end{enumerate}


\end{lemma}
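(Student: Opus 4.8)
The three claims nest, so the plan is to establish them in order and then combine \ref{claim:singvals_bounded_if_params_bounded} and \ref{claim:params_bounded_if_singvals_bounded} in a bootstrap to get \ref{claim:sigval_bounded_everywhere}. Claim \ref{claim:singvals_bounded_if_params_bounded} is purely perturbative: since $\thetav \in \Ball(\thetavz,R)$, the Lipschitz property of the Jacobian map gives
\[
\norm{\jtheta - \jthetaz} \leq \Lip(\jcal)\norm{\thetav - \thetavz} \leq \Lip(\jcal)R = \sigmin(\jthetaz)/2 ,
\]
and Weyl's inequality for singular values then yields $\sigmin(\jtheta) \geq \sigmin(\jthetaz) - \norm{\jtheta - \jthetaz} \geq \sigmin(\jthetaz)/2$, the rank being preserved because the right-hand side stays positive. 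I would record the sharper form $\norm{\thetav - \thetavz} \leq \rho \implies \sigmin(\jtheta) \geq \sigmin(\jthetaz) - \Lip(\jcal)\rho$, since its strictness is what closes the bootstrap.

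For claim \ref{claim:params_bounded_if_singvals_bounded}, the key is to bound the parameter drift by the decay of the residual rather than by the largest singular value of the Jacobian. Writing the flow as $\dot\thetav(s) = -\tfrac{1}{m}\jthetas\tp\fop\tp(\ysy)$, so that $\norm{\dot\thetav(s)} = \tfrac{1}{m}\norm{\jthetas\tp\fop\tp(\ysy)}$, I would start from the scalar identity behind \eqref{eq:maindiffineq}, namely $\deriv{\norm{\ysy}}{s} = -\tfrac{1}{m\norm{\ysy}}\norm{\jthetas\tp\fop\tp(\ysy)}^2$. Since $\ysy \in \ran{\fop}$, the lower bound $\norm{\jthetas\tp\fop\tp(\ysy)} \geq \sigmin(\jthetas)\sigminA\norm{\ysy}$ holds; substituting \emph{only one} factor of it into the identity gives
\[
-\deriv{\norm{\ysy}}{s} \geq \frac{\sigmin(\jthetas)\sigminA}{m}\norm{\jthetas\tp\fop\tp(\ysy)} = \sigmin(\jthetas)\sigminA\norm{\dot\thetav(s)} .
\]
Using the hypothesis $\sigmin(\jthetas) \geq \sigmin(\jthetaz)/2$ on $[0,t]$ and integrating, the right-hand side telescopes to
\[
\norm{\thetavt - \thetavz} \leq \int_0^t \norm{\dot\thetav(s)}\,ds \leq \frac{2}{\sigmin(\jthetaz)\sigminA}\pa{\norm{\yzy} - \norm{\yty}} \leq R' ,
\]
which is exactly the claimed radius, and no upper bound on $\sigmax(\jcal)$ is ever needed.

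Claim \ref{claim:sigval_bounded_everywhere} is a continuity/bootstrap argument. I would set $T = \sup\{t \geq 0 : \sigmin(\jthetas) \geq \sigmin(\jthetaz)/2 \text{ for all } s \in [0,t]\}$; this set is nonempty because the bound holds strictly at $t=0$, and $s \mapsto \sigmin(\jthetas)$ is continuous since the flow is $C^1$, the Jacobian map is Lipschitz, and the smallest singular value is continuous in the matrix as long as the rank is preserved. Assuming $T < \infty$, the hypothesis of \ref{claim:params_bounded_if_singvals_bounded} holds on $[0,T]$ by continuity, hence $\norm{\thetav(T) - \thetavz} \leq R' < R$; feeding this into the sharpened form of \ref{claim:singvals_bounded_if_params_bounded} yields
\[
\sigmin(\jcal(\thetav(T))) \geq \sigmin(\jthetaz) - \Lip(\jcal)R' > \sigmin(\jthetaz) - \Lip(\jcal)R = \sigmin(\jthetaz)/2 ,
\]
contradicting $\sigmin(\jcal(\thetav(T))) = \sigmin(\jthetaz)/2$ forced by continuity at the supremum. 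Thus $T = \infty$.

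The main obstacle is the drift estimate in \ref{claim:params_bounded_if_singvals_bounded}: the naive bound $\norm{\dot\thetav(s)} \leq \tfrac{1}{m}\sigmax(\jthetas)\norm{\fop}\norm{\ysy}$ would require controlling $\sigmax(\jcal)$ and would not deliver the sharp radius $R'$. Splitting $\norm{\jthetas\tp\fop\tp(\ysy)}^2$ into one factor bounded below by $\sigmin(\jthetas)\sigminA\norm{\ysy}$ and one factor kept intact, so that $\norm{\dot\thetav(s)}$ is dominated by $-\deriv{\norm{\ysy}}{s}$ up to the constant $2/(\sigmin(\jthetaz)\sigminA)$, is the crux that both produces the correct $R'$ and lets the strict gap $R' < R$ close the bootstrap in \ref{claim:sigval_bounded_everywhere}.
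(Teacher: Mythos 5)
Your proposal is correct and follows essentially the same route as the paper: claims (i) and (ii) reproduce the paper's argument exactly (Lipschitz perturbation of the Jacobian plus Weyl's inequality for (i); substituting a single factor of the bound $\norm{\jthetas\tp\fop\tp(\ysy)} \geq \sigmin(\jthetas)\sigminA\norm{\ysy}$ into the residual-decay identity and integrating the parameter drift for (ii)). For (iii) you run the same continuity/bootstrap contradiction, only organized around the supremum time of the singular-value bound together with the strict gap $\Lip(\jcal)R' < \Lip(\jcal)R$, whereas the paper argues via the first exit time $t_0$ from $\Ball(\thetavz,R)$ and an $\epsilon$-limiting step --- a purely cosmetic difference.
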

\begin{proof}
\begin{enumerate}[label=(\roman*)]
\item Since $\thetav \in \Ball(\thetavz,R)$, we have
    \begin{align*}
        \norm{\jtheta - \jthetaz} \leq \Lip(\jcal)\norm{\thetav - \thetavz} \leq \Lip(\jcal)R.
    \end{align*}
    By using that $\sigmin(A)$ is 1-Lipschitz, we obtain
    \begin{align*}
        \sigmin(\jtheta) \geq \sigmin(\jthetaz) - \norm{\jtheta - \jthetaz} \geq \frac{\sigmin(\jthetaz}{2} .
    \end{align*}

\item From \eqref{eq:maindiffineq}, we have for all $s \in [0,t]$
\begin{align*}
    \deriv{\norm{\ysy}}{s} &= - \frac{1}{m}\frac{\norm{\jthetas\tp\fop\tp(\ysy)}^2}{\norm{\ysy}}\\
    &\leq - \frac{\sigmin(\jthetaz)\sigminA}{2m}\norm{\jthetas\tp\fop\tp(\ysy)}.
\end{align*}
The Cauchy-Schwarz inequality and \eqref{eq:gradflow} imply that
\begin{align*}
\deriv{\norm{\thetavs - \thetavz}}{s} = \frac{\dot{\thetav}(s)\tp\pa{\thetavs - \thetavz}}{\norm{\thetavs - \thetavz}} \leq \norm{\dot{\thetav}(s)} = \frac{1}{m}\norm{\jthetas\tp\fop\tp(\ysy)}.
\end{align*}
We therefore get
\begin{align*}
\deriv{\norm{\thetavs - \thetavz}}{s} + \frac{2}{\sigmin(\jthetaz)\sigminA}\deriv{\norm{\ysy}}{s} \leq 0 .
\end{align*}
Integrating over $s \in [0,t]$, we get the claim.

\item Actually, we prove the stronger statement that $\thetavt \in \Ball(\thetavz,R')$ for all $t \geq 0$, whence our claim will follow thanks to \ref{claim:singvals_bounded_if_params_bounded}. Let us assume for contradiction that $R'<R$ and $\exists~ t < +\infty$ such that $\thetavt \notin \Ball(\thetavz,R')$. By \ref{claim:params_bounded_if_singvals_bounded}, this means that $\exists~ s \leq t$ such that $\sigmin(\jthetas) < \sigmin(\jthetaz)/2$. In turn, \ref{claim:singvals_bounded_if_params_bounded} implies that $\thetav(s) \notin \Ball(\thetavz,R)$. Let us define
    \begin{align*}
        t_0 = \inf\{\tau \geq 0:\thetav(\tau) \notin \Ball(\thetavz,R)\},
    \end{align*}
    which is well-defined as it is at most $s$. Thus, for any small $\epsilon > 0$ and for all $t' \leq t_0 - \epsilon$, $\thetav(t') \in \Ball(\thetavz,R)$ which, in view of \ref{claim:singvals_bounded_if_params_bounded} entails that $\sigmin(\jtheta(t')) \geq \sigmin(\jthetaz)/2$. In turn, we get from \ref{claim:params_bounded_if_singvals_bounded} that $\thetav(t_0-\epsilon) \in \Ball(\thetavz,R')$. Since $\epsilon$ is arbitrary and $\thetav$ is continuous, we pass to the limit as $\epsilon \to 0$ to deduce that $\thetav(t_0) \in \Ball(\thetavz,R') \subsetneq \Ball(\thetavz,R)$ hence contradicting the definition of $t_0$.
\qed
\end{enumerate}
\end{proof}

\begin{lemma}[Bound on $\sigmin(\jthetaz)$]
\label{lemma:min_eigenvalue_singvalue_init}
For the one-hidden layer network \eqref{eq:dipntk}, under assumptions \eqref{ass:u_sphere}-\eqref{ass:phi_diff}. We have
\begin{align*}
\sigmin(\jthetaz) \geq \Cphid/2 
\end{align*}
with probability at least $1-n^{-1}$ provided $k \geq C n\log(n)$ for $C > 0$ large enough that depends only on $\phi$ and the bound on the  entries of $\Vv$.
\end{lemma}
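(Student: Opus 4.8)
The plan is to reduce the claimed spectral lower bound to a concentration statement for the smallest eigenvalue of the Gram matrix $\jthetaz\jthetaz\tp$, which turns out to be a sum of $k$ independent rank-one positive semidefinite matrices. Since $\sigmin(\jthetaz)^2 = \lammin(\jthetaz\jthetaz\tp)$, it suffices to show $\lammin(\jthetaz\jthetaz\tp) \geq \Cphid^2/4$ with probability at least $1-n^{-1}$. First I would compute the Jacobian explicitly: writing the $i$-th coordinate of \eqref{eq:dipntk} as $\tfrac{1}{\sqrt{k}}\sum_{j=1}^k V_{ij}\phi(\uv\tp\wt{j})$ and differentiating with respect to the entry $W_{jl}$ gives $\tfrac{1}{\sqrt{k}}V_{ij}\phi'(\uv\tp\wt{j})u_l$. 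Forming $\jthetaz\jthetaz\tp$ and summing over $l$, the factor $\sum_l u_l^2 = \norm{\uv}^2 = 1$ (Assumption~\ref{ass:u_sphere}) collapses, yielding the decomposition
\begin{align*}
\jthetaz\jthetaz\tp = \frac{1}{k}\sum_{j=1}^k \phi'(\uv\tp\wt{j})^2\, \Vv_j\Vv_j\tp .
\end{align*}

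Second, I would identify the mean of this sum. By Assumption~\ref{ass:w_init} and rotation invariance each $\uv\tp\wt{j}$ is $\stddistrib$, so $\Expect{}{\phi'(\uv\tp\wt{j})^2} = \Cphid^2$; combined with the independence of $\Wv(0)$ and $\Vv$ and the identity-covariance Assumption~\ref{ass:v_init} (which gives $\E\sqb{\Vv_j\Vv_j\tp} \succeq \Id_n$), this yields $\E\sqb{\jthetaz\jthetaz\tp} = \Cphid^2 \cdot \tfrac{1}{k}\sum_{j=1}^k \E\sqb{\Vv_j\Vv_j\tp} \succeq \Cphid^2 \Id_n$, whence $\mumin := \lammin\pa{\E\sqb{\jthetaz\jthetaz\tp}} \geq \Cphid^2$.

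The crux is a matrix Chernoff argument. Each summand $X_j = \tfrac{1}{k}\phi'(\uv\tp\wt{j})^2 \Vv_j\Vv_j\tp$ is positive semidefinite and uniformly bounded in operator norm: since $\phi'$ is $B$-bounded by Assumption~\ref{ass:phi_diff} and the entries of $\Vv$ are $D$-bounded by Assumption~\ref{ass:v_init}, one has $\lammax(X_j) \leq \tfrac{1}{k}B^2\norm{\Vv_j}^2 \leq \tfrac{B^2 D^2 n}{k} =: L$. Applying the matrix Chernoff inequality with relative deviation $\delta = 1/2$ (so that $(1-\delta)\mumin \geq \Cphid^2/2 > \Cphid^2/4$) then gives, for an absolute constant $c > 0$,
\begin{align*}
\prob{\sigmin(\jthetaz) \leq \Cphid/2} \leq n\,\exp\pa{-c\,\frac{\mumin}{L}} \leq n\,\exp\pa{-c\,\frac{\Cphid^2 k}{B^2 D^2 n}} .
\end{align*}
Requiring the right-hand side to be at most $n^{-1}$ forces $c\,\Cphid^2 k/(B^2 D^2 n) \geq 2\log n$, i.e.\ $k \geq C\, n\log n$ with $C = 2 B^2 D^2 /(c\,\Cphid^2)$ depending only on $\phi$ (through $B$ and $\Cphid$) and the entrywise bound $D$, exactly as stated.

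The main obstacle is the per-summand operator-norm bound: the deterministic estimate $\norm{\Vv_j}^2 \leq D^2 n$, a direct consequence of the $D$-boundedness of the entries of $\Vv$, is what introduces the factor $n$ into $L$ and thereby fixes the $n\log n$ overparametrization. A sharper subgaussian or matrix Bernstein treatment of $\norm{\Vv_j}$ could refine the numerical constants, but under the bounded-entry Assumption~\ref{ass:v_init} the matrix Chernoff route is the cleanest tool and already delivers both the stated scaling in $k$ and the required success probability.
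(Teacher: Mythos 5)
Your proposal is correct and follows essentially the same route as the paper: the identical rank-one decomposition $\jthetaz\jthetaz\tp = \frac{1}{k}\sum_{j}\phi'(\uv\tp\wt{j})^2\Vv_j\Vv_j\tp$ (using $\norm{\uv}=1$), the same mean computation via rotation invariance of the Gaussian, the same per-summand bound $B^2D^2n/k$, and the same matrix Chernoff inequality with $\delta = 1/2$ yielding $k \gtrsim n\log n$. If anything, your use of $\E\sqb{\Vv_j\Vv_j\tp} \succeq \Id_n$ is slightly more careful than the paper's equality, which implicitly assumes centered columns, but this changes nothing in the argument.
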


\begin{proof}
Define the matrix $\Hv = \jthetaz\jthetaz\tp$. For the two-layer network, and since $\uv$ is on the unit sphere, $\Hv$ reads
\[
\Hv = \frac{1}{k} \sum_{i=1}^k \phi'(\Wv^i(0)\uv)^2\Vv_i\Vv_i\tp .
\]
It follows that
\[
\Expect{}{\Hv} = \Expect{g\sim \stddistrib}{\phi'(g)^2}\frac{1}{k} \sum_{i=1}^k\Expect{}{\Vv_i \Vv_i\tp} = \Cphid^2 \Id_n ,
\]
where we used \ref{ass:u_sphere}-\ref{ass:w_init} and orthogonal invariance of the Gaussian distribution, hence $\Wv^i(0)\uv$ are iid $\stddistrib$, as well as \ref{ass:v_init} and independence between $\Vv$ and $\Wv(0)$. Moreover,
\[
\lammax(\phi'(\Wv^i(0)\uv)^2\Vv_i\Vv_i\tp) \leq B^2 D^2 n .
\]
We can then apply the matrix Chernoff inequality \cite[Theorem~5.1.1]{tropp_introduction_2015} to get
\[
\prob{\sigmin(\jthetaz) \leq \delta\Cphid} \leq ne^{-\frac{(1-\delta)^2k\Cphid^2}{2B^2 D^2 n}} .
\]
Taking $\delta=1/2$ and $k$ as prescribed, we conclude.
\qed
\end{proof}

\begin{lemma}[Lipschitz constant of the Jacobian]\label{lemma:lip-jacobian}
For the one-hidden layer network \eqref{eq:dipntk}, under assumptions \eqref{ass:u_sphere}, \eqref{ass:w_init} and \eqref{ass:phi_diff}, we have
\[
\Lip(\jcal) \leq B D \sqrt{\frac{n}{k}} .
\]
\end{lemma}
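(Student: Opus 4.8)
The plan is to bound the operator norm $\norm{\jW - \jWalt}$ directly for two weight matrices $\Wv$ and $\Wvalt$, in terms of $\normf{\Wv - \Wvalt} = \norm{\thetav - \thetavalt}$ (recall $\thetav = \vecmat{\Wv}$), and to extract the scaling $BD\sqrt{n/k}$ from a column-wise estimate of $\Vv$ rather than a crude operator-norm bound. First I would write the Jacobian explicitly: since the parameter is $\thetav = \Wv$ with $\Vv$ fixed and $\gdip = \frac{1}{\sqrt{k}}\Vv\phi(\Wv\uv)$, differentiating the $j$-th output component with respect to $\Wv_{il}$ gives $\frac{1}{\sqrt{k}}\Vv_{ji}\,\phi'(\Wv^i\uv)\,\uv_l$, so $\jW \in \R^{n\times kd}$ with columns indexed by the pairs $(i,l)$.

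Next, for any test direction $\vv \in \R^{kd}$ with $\norm{\vv}=1$, identified with a matrix $\Mv \in \R^{k\times d}$ via $\vv = \vecmat{\Mv}$ (so $\normf{\Mv}=1$), a short computation collapses the difference of the two Jacobians to $(\jW - \jWalt)\vv = \frac{1}{\sqrt{k}}\Vv\mathbf{b}$, where $\mathbf{b}\in\R^k$ has entries $b_i = \left(\phi'(\Wv^i\uv)-\phi'(\Wvalt^i\uv)\right)(\Mv^i\uv)$. The \emph{key} estimate then avoids $\norm{\Vv} \le \normf{\Vv} \le D\sqrt{nk}$ (which would only yield $BD\sqrt{n}$) and instead proceeds column-wise: $\norm{\Vv\mathbf{b}} = \norm{\sum_i b_i\Vv_i}\le \sum_i |b_i|\norm{\Vv_i} \le D\sqrt{n}\sum_i |b_i|$, using that each column obeys $\norm{\Vv_i}\le D\sqrt{n}$ by the $D$-boundedness of the entries of $\Vv$ (Assumption~\ref{ass:v_init}). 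Since $\phi''$ is $B$-bounded by \eqref{ass:phi_diff}, the map $\phi'$ is $B$-Lipschitz, so with $\norm{\uv}=1$ (Assumption~\ref{ass:u_sphere}) one has $|\phi'(\Wv^i\uv)-\phi'(\Wvalt^i\uv)|\le B\norm{\Wv^i - \Wvalt^i}$ and $|\Mv^i\uv|\le\norm{\Mv^i}$. A Cauchy--Schwarz step over the $k$ rows gives $\sum_i |b_i| \le B\normf{\Wv - \Wvalt}\,\normf{\Mv} = B\normf{\Wv - \Wvalt}$.

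Combining the two bounds yields $\norm{(\jW - \jWalt)\vv} \le \frac{1}{\sqrt{k}}\,D\sqrt{n}\,B\,\normf{\Wv - \Wvalt} = BD\sqrt{n/k}\,\normf{\Wv - \Wvalt}$, and taking the supremum over unit $\vv$ delivers the claimed bound on $\Lip(\jcal)$. The only delicate point, and the place where the argument could silently lose a factor, is the second step: one must resist replacing $\norm{\Vv}$ by its Frobenius norm and instead exploit the column boundedness together with Cauchy--Schwarz, since it is precisely this refinement that produces the $1/\sqrt{k}$ gain needed for the overparametrization bounds in Theorem~\ref{thm:main}.
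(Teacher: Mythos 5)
Your proof is correct and is in substance the same argument as the paper's: both exploit the per-neuron block structure of $\jcal$, the column bound $\norm{\Vv_i} \leq D\sqrt{n}$ from Assumption~\ref{ass:v_init}, the $B$-Lipschitzness of $\phi'$ from \eqref{ass:phi_diff}, and $\norm{\uv}=1$, arriving at the identical constant $BD\sqrt{n/k}$. The only difference is bookkeeping: the paper bounds the operator norm of $\jW - \jWalt$ by its Frobenius norm, which factorizes over the blocks $\frac{1}{\sqrt{k}}\bigl(\phi'(\Wv^i\uv)-\phi'(\Wvalt^i\uv)\bigr)\Vv_i\uv\tp$ in one line, whereas you apply the difference to a unit test vector and use the triangle inequality plus Cauchy--Schwarz over the rows --- an equivalent route (and your observation that a crude $\norm{\Vv}\leq D\sqrt{nk}$ bound would lose the $1/\sqrt{k}$ factor correctly identifies why the column-wise estimate is essential).
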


\begin{proof}
We have for all $\Wv, \Wvalt \in \R^{k \times d}$, 
\begin{align*}
\norm{\jW - \jWalt}^2 
&\leq \frac{1}{k} \sum_{i=1}^k |\phi'(\Wv^i\uv) - \phi'(\Wvalt^i\uv)|^2 \normf{\Vv_i\uv\tp}^2 \\
&=\frac{1}{k} \sum_{i=1}^k |\phi'(\Wv^i\uv) - \phi'(\Wvalt^i\uv)|^2 \norm{\Vv_i}^2 \\
&\leq B^2D^2\frac{n}{k} \sum_{i=1}^k |\Wv^i\uv - \Wvalt^i\uv|^2 \\
&\leq B^2D^2\frac{n}{k} \sum_{i=1}^k \norm{\Wv^i - \Wvalt^i}^2 = B^2D^2\frac{n}{k} \normf{\Wv - \Wvalt}^2 .
\end{align*}
\qed
\end{proof}

\begin{lemma}[Bound on the initial error]\label{lemma:bound_initial_misfit}
Under the main assumptions, the initial error of the network is bounded by
\begin{align*}
\norm{\yv(0) - \yv} \leq \norm{\fop}\pa{C \sqrt{n\log(d)} + \sqrt{n}\norminf{\xvz} + \sqrt{m}\norminf{\epsilon}},
\end{align*}
with probability at least $1 - d^{-1}$.
\end{lemma}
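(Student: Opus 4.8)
The plan is to reduce the claim to a single high-probability bound on the norm of the network output at initialization, $\norm{\gdipz}$, and then control this random quantity by conditioning on $\Wv(0)$ and concentrating over $\Vv$. First I would decompose the initial misfit: since $\yv = \fop\xvc + \epsilon$ and $\yvz = \fop\gdipz$, the triangle inequality together with sub-multiplicativity of the operator norm gives
\[
\norm{\yvz - \yv} = \norm{\fop\pa{\gdipz - \xvc} - \epsilon} \le \norm{\fop}\pa{\norm{\gdipz} + \sqrt{n}\norminf{\xvc}} + \sqrt{m}\norminf{\epsilon},
\]
where I used $\norm{\xvc}\le\sqrt{n}\norminf{\xvc}$ and $\norm{\epsilon}\le\sqrt{m}\norminf{\epsilon}$; the last term is of the stated form once the factor $\norm{\fop}$ (which we may take $\ge 1$) is pulled out. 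Since $\xvc$ and $\epsilon$ are deterministic, all the probabilistic content, and hence the probability $1-d^{-1}$, resides in the single task of showing $\norm{\gdipz}\le C\sqrt{n\log d}$ with high probability.

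For this core estimate I would set $\zv = \phi(\Wv(0)\uv)\in\R^k$ so that $\gdipz = \tfrac{1}{\sqrt{k}}\Vv\zv$. By Assumptions~\ref{ass:u_sphere} and~\ref{ass:w_init} and orthogonal invariance of the Gaussian, the entries $\Wv^i(0)\uv$ are iid $\stddistrib$, hence $\zv$ has iid coordinates. I would then condition on $\Wv(0)$ (equivalently on $\zv$) and treat $\gdipz$ as a linear image of the independent, $D$-bounded, identity-covariance columns of $\Vv$. Using $\Expect{\Vv}{\Vv\tp\Vv} = n\Id_k$ (Assumption~\ref{ass:v_init}), the conditional second moment is
\[
\Expect{\Vv}{\norm{\gdipz}^2} = \frac{1}{k}\zv\tp\Expect{\Vv}{\Vv\tp\Vv}\zv = \frac{n}{k}\norm{\zv}^2 ,
\]
so by Jensen $\Expect{\Vv}{\norm{\gdipz}}\le\sqrt{n/k}\,\norm{\zv}$. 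The map $\Vv\mapsto\norm{\tfrac{1}{\sqrt{k}}\Vv\zv}$ is convex and $\tfrac{\norm{\zv}}{\sqrt{k}}$-Lipschitz in the Frobenius norm, so Talagrand's concentration inequality for convex Lipschitz functions of bounded independent variables yields, conditionally on $\Wv(0)$,
\[
\norm{\gdipz} \le \sqrt{\tfrac{n}{k}}\,\norm{\zv} + c\,D\,\frac{\norm{\zv}}{\sqrt{k}}\sqrt{\log d}
\]
with conditional probability at least $1-\tfrac{1}{2}d^{-1}$; the factor $\sqrt{\log d}$ is precisely what is needed to drive the Gaussian-type tail down to $d^{-1}$.

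Finally I would remove the conditioning by controlling $\norm{\zv}^2 = \sum_{i=1}^k\phi(\Wv^i(0)\uv)^2$, a sum of $k$ iid terms with mean $k\Cphi^2$. Since $\phi$ is $B$-Lipschitz by Assumption~\ref{ass:phi_diff}, each summand is sub-exponential, so the average concentrates and $\norm{\zv}\le C'\sqrt{k}$ holds with probability at least $1-\tfrac{1}{2}d^{-1}$ (only an $O(1)$ relative deviation is required, which is comfortable for large $k$). Substituting this into the previous display and taking a union bound over the two failure events gives $\norm{\gdipz}\lesssim\sqrt{n} + \sqrt{\log d}\le C\sqrt{n\log d}$ with probability at least $1-d^{-1}$, which closes the argument. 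The main obstacle is the concentration step: one must resist the crude estimate $\norm{\gdipz}\le\tfrac{1}{\sqrt{k}}\norm{\Vv}\norm{\zv}$, which loses a full factor $\sqrt{k}$ because $\norm{\Vv}\approx\sqrt{k}$, and instead exploit the cancellations in $\Vv\zv$ through a concentration inequality adapted to the \emph{bounded}, non-Gaussian entries of $\Vv$. Keeping the two independent sources of randomness, $\Vv$ and $\Wv(0)$, cleanly separated by conditioning is what makes the bound both correct and tight in the scaling $\sqrt{n\log d}$.
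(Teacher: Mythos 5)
Your proof reaches the stated bound, but by a genuinely different probabilistic route than the paper's. The paper keeps $\Vv$ fixed and concentrates over the Gaussian matrix $\Wv(0)$: it shows that $\Wv \mapsto \norm{\mathbf{g}(\uv,\Wv)}$ is $BD\sqrt{n}$-Lipschitz in the Frobenius norm, bounds $\Expect{}{\norm{\gdipz}} \leq \Cphi\sqrt{n}$, and applies Gaussian concentration of Lipschitz functions with $\tau = \sqrt{2}BD\sqrt{n\log d}$ --- a one-shot, dimension-free argument that uses $\Vv$ only through $\norm{\Vv_i} \leq D\sqrt{n}$ and imposes no relation between $k$ and $d$. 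You instead condition on $\Wv(0)$ and concentrate over $\Vv$, paying with a second concentration step (Bernstein for $\norm{\zv}^2$ over the Gaussian randomness) and a union bound. What your route buys is a sharper intermediate estimate, $\norm{\gdipz} \lesssim \sqrt{n} + \sqrt{\log d}$ rather than $\sqrt{n\log d}$, which reveals that the $\log d$ factor in the lemma is an artifact of the Gaussian tail rather than intrinsic; since the lemma only claims $C\sqrt{n\log d}$, this gain goes unused.

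Two caveats, one of which is a real (if repairable) gap. First, Talagrand's inequality for convex Lipschitz functions requires independent bounded \emph{scalar} coordinates, whereas Assumption~\ref{ass:v_init} only grants iid \emph{columns} with $D$-bounded entries --- nothing is said about independence of entries within a column, and the paper is careful to use only column-level structure throughout (here and in Lemma~\ref{lemma:min_eigenvalue_singvalue_init}). As written, your concentration step assumes more than the hypotheses provide. The fix is easy: apply the bounded-differences inequality over the $k$ independent columns (changing $\Vv_i$ moves your function by at most $2D\sqrt{n}\,|z_i|/\sqrt{k}$), which degrades the deviation term from $cD\frac{\norm{\zv}}{\sqrt{k}}\sqrt{\log d}$ to $cD\sqrt{n}\,\frac{\norm{\zv}}{\sqrt{k}}\sqrt{\log d}$; after $\norm{\zv} \lesssim \sqrt{k}$ this is still $O(\sqrt{n\log d})$, so the final claim survives. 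Second, your Bernstein step for $\norm{\zv}^2$ fails with probability $e^{-ck}$, so driving it below $\tfrac{1}{2}d^{-1}$ requires $k \gtrsim \log d$ --- a hypothesis absent from the lemma and not needed by the paper's proof. It does hold under the overparametrization of Theorem~\ref{thm:main}(ii), so nothing breaks where the lemma is used, but as a standalone statement your proof is conditional on it. Finally, note that your identity $\Expect{}{\Vv\tp\Vv} = n\Id_k$ implicitly requires the columns of $\Vv$ to be centered (for the off-diagonal terms to vanish); the paper's computation of $\Expect{}{\norm{\gdipz}^2} = \Cphi^2 n$ makes the same unstated assumption, so this is a shared reading of Assumption~\ref{ass:v_init} rather than a defect of your argument.
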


\begin{proof}
We first observe that
\[
\norm{\yvz - \yv} \leq \norm{\fop}\norm{\gdipz} + \norm{\fop}\pa{\sqrt{n} \norminf{\xvz} + \sqrt{m} \norminf{\epsilon}} ,
\]
where $\gdipz = \frac{1}{\sqrt{k}} \sum_{i=1}^k \phi(\Wv^i\uv)\Vv_i$. We now prove that this term concentrates around its expectation. First, we have by independence
\[
\Expect{}{\norm{\gdipz}}^2 \leq \frac{1}{k}\Expect{}{\norm{\sum_{i=1}^k \phi(\Wv^i\uv)\Vv_i}^2} = \Expect{}{\phi(\Wv^1\uv)^2\norm{\Vv_1}^2} = \Cphi^2 n .
\]
In addition,
\begin{align*}
&\abs{\norm{\mathbf{g}(\uv,\Wv)} - \lVert\mathbf{g}(\uv,\Wvalt)\rVert} 
\leq \frac{1}{\sqrt{k}}\norm{\sum_{i=1}^k \pa{\phi(\Wv^i\uv) - \phi(\Wvalt^i\uv)}\Vv_i} \\
&\leq BD\sqrt{n}\pa{\frac{1}{\sqrt{k}}\sum_{i=1}^k \norm{\Wv^i - \Wvalt^i}} \leq BD\sqrt{n}\normf{\Wv - \Wvalt} .
\end{align*}
We then get
\begin{align*}
&\prob{\norm{\mathbf{g}(\uv,\Wv(0))} \geq \Cphi \sqrt{n\log(d)} + \tau} \\
&\leq \prob{\norm{\mathbf{g}(\uv,\Wv(0))} \geq \Expect{}{\norm{\gdipz}} + \tau}\\ 
&\leq e^{-\frac{\tau^2}{2\Lip(\norm{\mathbf{g}(\uv,\Wv(0))})^2}} \leq e^{-\frac{\tau^2}{2nB^2D^2}} .
\end{align*}
Taking $\tau = \sqrt{2}BD \sqrt{n\log(d)}$, we get the desired claim. 
\qed
\end{proof}

\section{Numerical Experiments}\label{sec:expes}


%

We conducted numerical experiments to verify our theoretical finding, by evaluating the convergence of networks with different architecture parameters in the noise-free context. Every network was initialized in accordance with the assumptions of our work and we used the sigmoid activation function. Both $\fop$ and $\xvc$ entries were drawn i.i.d from $\stddistrib$. We used gradient descent to optimize the networks with a fixed step size of 1. A network was trained until it reached a loss of $10^{-7}$ or after 25000 optimization steps. For each set of architecture parameters, we did 50 runs and calculated the frequency at which the network arrived at the error threshold of $10^{-7}$.

\begin{figure}[htb!]
    \centering
    \includegraphics[width=.7\linewidth]{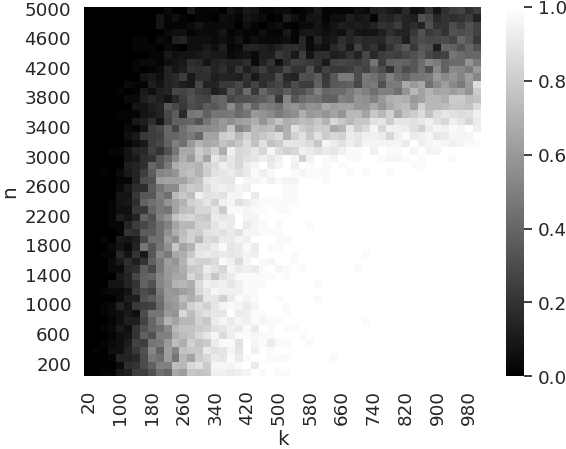}
    \caption{Probability of arriving at a zero loss solution for networks with fixed number of observations $m$, yet varying number of neurons $k$ and signal size~$n$. This emphasizes that the required level of over-parametrization scales at least quadatically with $n$.}
    \label{fig:plot_1}
\end{figure}

We present in Figure~\ref{fig:plot_1} a first experiment where we fix the number of observations $m=10$ and the input size $d=500$, and we let the number of neurons~$k$ and the signal size $n$ vary. It can be observed in this experiment that for any value of $n$, a zero-loss solution is reached with high probability as long as $k$ is ``large enough'', where the phase transition seems to follow a quadratic law. Given that in this setup $n \gg m$ and $\fop$ is Gaussian, this empirical observation is consistent with the theoretical quadratic relation $k \gtrsim n^2m$ which is predicted by our main theorem. However, one may be surprised by the wide range of values of $n$ which can be handled with a fixed $k$. Consider for instance the case $k=900$: convergence is attained for values of $n$ up to $3000$, which includes cases where $k < n$. This goes against our intuition for such underparametrized cases. 

\begin{figure}[htb!]
    \centering
    \includegraphics[width=.7\linewidth]{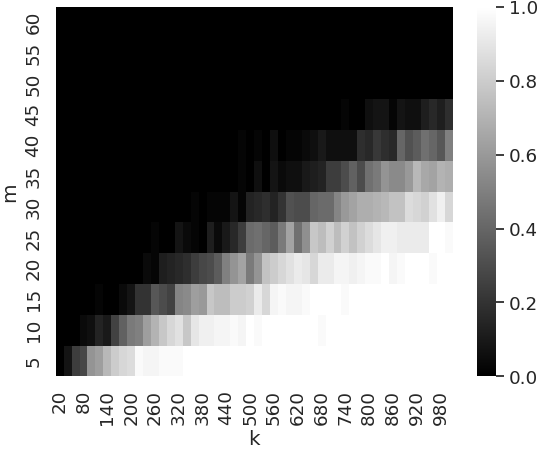}
    \caption{Probability of arriving at a zero loss solution for networks with fixed signal size $n$, and varying number of neurons $k$ and of observations $m$. This emphasizes that the required level of over-parametrization scales linearly with $m$.}
    \label{fig:plot_2}
\end{figure}

Figure~\ref{fig:plot_2} presents a second experiment, where we now fix $n=60$ (still with $d=500$), while letting $k$ vary with $m$. Therein, the expected linear relation between $k$ and $m$ clearly appears, which provides another empirical validation for our theoretical bound. Now, let us consider again a fixed level of over-parametrization, e.g., $k=900$. Contrarily to the previous experiment on the signal size $n$, the range of observations number $m$ which can be tackled is more restricted (here, convergence is observed for values of $m$ up to $25$). For problems where the ratio $m/n$ largely deviates from zero, the level of required over-parametrization is thus much more important than for problems where $n \gg m$. Overall, these experiments validate the order of magnitude of our theoretical bounds, although they also emphasize that these bounds are not really tight.

\section{Conclusion and Future Work}\label{sec:conclu}


This paper studied the convergence of shallow DIP networks and provided bounds on the level of overparametrization, both in the input dimension and the hidden layer dimension, under which the method converges exponentially fast to a zero-loss solution. The proof relies on bounding the minimum singular values of the Jacobian of the network through an overparametrization that ensures a good initialization. These bounds are not tight, as demonstrated by the numerical experiments, but they provide an important step towards the theoretical understanding of DIP methods, and neural networks for inverse problems resolution in general. In the future, this work will be extended in several directions. First, we will study recovery guarantees of the signal $\xvc$. Second, we will investigate the DIP model with unrestricted linear layers and possibly in the multilayer setting.

\medskip

\subsubsection{Acknowledgements} The authors thank the French National Research Agency (ANR) for funding the project ANR-19-CHIA-0017-01-DEEP-VISION.

\newpage

\printbibliography

\end{document}